\documentclass[11pt]{amsart}

\usepackage{amsmath,amssymb,amsthm,amsrefs,a4wide}
\usepackage{latexsym}
\usepackage{indentfirst}
\usepackage{graphicx}
\usepackage{placeins}
\usepackage{booktabs}
\usepackage{algorithm}
\usepackage{algorithmic}
\usepackage{multirow}
\usepackage{color}

\theoremstyle{plain}
\newtheorem{theorem}{Theorem}

\theoremstyle{definition}

\theoremstyle{remark}
\newtheorem{remark}{Remark}

\DeclareMathOperator{\tr}{tr}

\DeclareMathOperator{\argmax}{argmax}

\newcommand{\bbm}{\begin{bmatrix}}
\newcommand{\ebm}{\end{bmatrix}}
\newcommand{\R}{\mathbb{R}}

\newcommand{\A}{\mathcal{A}}
\renewcommand{\t}{\top}

\newcommand{\E}{\mathbb{E}}

\newcommand{\grad}{\nabla}

\begin{document}

\title[A note on continuous-time online learning]{A note on continuous-time online learning}

\author[]{Lexing Ying} \address[Lexing Ying]{Department of Mathematics, Stanford University,
  Stanford, CA 94305} \email{lexing@stanford.edu}

\thanks{This work is partially supported by NSF grant DMS-2011699 and DMS-2208163.}

\keywords{Online learning, online optimization, adversarial bandits, adversarial linear bandits.}

\subjclass[2010]{37N40,68W27}

\begin{abstract}
In online learning, the data is provided in a sequential order, and the goal of the learner is to make online decisions to minimize overall regrets. This note is concerned with continuous-time models and algorithms for several online learning problems: online linear optimization, adversarial bandit, and adversarial linear bandit. For each problem, we extend the discrete-time algorithm to the continuous-time setting and provide a concise proof of the optimal regret bound.
\end{abstract}

\maketitle

\section{Introduction}\label{sec:intro}

In online learning, the data is provided in a sequential order, and the goal of the learner is to make online decisions to minimize overall regrets. This is particularly relevant for problems with a dynamic aspect. This topic has produced many surprisingly efficient algorithms that are nothing short of magic.

In most of the existing literature, online learning problems and algorithms are often placed in the discrete-time setting. There has been little work for online learning in the continuous-time setting. This note studies the continuous-time models for several important online learning problems
\begin{itemize}
\item online linear optimization,
\item adversarial bandit,
\item adversarial linear bandit.
\end{itemize}
For each problem, we propose a continuous-time model, describe an algorithm motivated by the discrete-time version, and provide a simple proof for the optimal regret bound. The main technical tools are Legendre transform and Ito's lemma.

{\bf Related work.} Several books, reviews, and lecture notes are devoted to online learning \cite{cesa2006prediction,lattimore2020bandit,bubeck2011introduction,hazan2022introduction,shalev2012online,rakhlin2009lecture,liang2016statistical} in the discrete-time setting. In recent years, there is a growing interest in the continuous-time setting \cite{portella2022continuous,fan2021diffusion,wager2021diffusion,kobzar2022pde,zhu2023continuous}. Our result for the adversarial bandit is closely related to the work in \cite{portella2022continuous}.

{\bf Contents.} The rest of this note is organized as follows. Section \ref{sec:legendre} summarizes the main results of the Legendre transform. Section \ref{sec:linearopt} discusses the online linear optimization problem. Section \ref{sec:adbandit} presents the continuous-time model for the adversarial bandit. Section \ref{sec:adlinearbandit} extends the result to the adversarial linear bandit.

\section{Legendre transform}\label{sec:legendre}

Let $X$ be a convex set in $\R^d$ and $F(x)$ be a convex function defined on $X$. To simplify the discussion, we assume that $F(x)$ is strictly convex.

The Legendre transform \cite{levi2014classical}, denoted by $G(y)$, of $F(x)$, is defined as
\[
G(y) \equiv \max_{x\in X} y^\t x - F(x),
\]
where the domain $Y$ of the set where $G(y)$ is bounded.

Let $x(y)$ be the point where the maximum is achieved for a given $y$. Then
\[
y = \grad F(x(y)).
\]
A key result of Legendre transform is that $F(x)$ is also the Legendre transform of $G(y)$:
\[
F(x) = \max_{y\in Y} x^\t y - G(y)
\]
and, similarly for a given $x$, the maximizer $y(x)$ satisifies
\[
x = \grad G(y(x)).
\] 
A trivial but useful inequality is
\[
F(x) + G(y) \ge x^\t y.
\]

In this note, we are concerned with the following case
\[
X = \Delta^d \equiv \left\{(x_1,\ldots,x_d): x_a\ge 0, \sum_{a=1}^d x_a = 1 \right\},\quad
Y = \R^d
\]
with $F(x)$ and $G(y)$ given by
\begin{equation}\label{eq:FG}
  F(x) = \beta^{-1} \sum_{a=1}^d x_a \ln x_a, \quad
  G(y) = \beta^{-1} \ln \left(\sum_{a=1}^d \exp(\beta y_a)\right),
\end{equation}
with $\beta>0$.

A direct calculation shows that
\[
(\grad G(y))_a = \frac{\exp(\beta y_a)}{\sum_c \exp(\beta y_c)} \equiv x_a
\]
and
\[
(\grad^2 G(y))_{ab} = \beta(\delta_{ab} x_a - x_a x_b).
\]

\section{Online linear optimization}\label{sec:linearopt}

{\bf Discrete-time problem.}
The discrete-time online linear optimization \cite{zinkevich2003online} is stated as follows.  At each round $t=1,\dots, T$
\begin{itemize}
\item The learner picks $x_t \in X=\Delta^d$.
\item The adversary picks reward $r_t \in [0,1]^d$.
\item The learner observes $r_t$ and gets reward $x_t^\t r_t$.
\end{itemize}

The regret is defined as
\[
R = \max_{x\in X} \sum_t (x-x_t)^\t r_t .
\]
\begin{remark}
  In most of the literature, the problem is formulated as minimizing the loss rather than maximizing the reward. These two formulations are equivalent. We chose the latter to put the problem into a Legendre transform setting.
\end{remark}

{\bf Discrete-time algorithm.}
An optimal algorithm for this problem is called follow-the-regularized-leader \cite{zinkevich2003online}. At each round $t=1,\ldots,T$, the learner chooses
\[
x_t = \argmax_{x\in X} \left( x^\t \left(\sum_{z=1}^{t-1} r_z\right) - F(x) \right)
\]
for $F(x)$ in \eqref{eq:FG}. By setting $\beta = \sqrt{\frac{2\ln d}{T}}$ in \eqref{eq:FG}, the discrete-time regret can be bounded by $O(\sqrt{T\ln d})$ \cite{lattimore2020bandit}.

{\bf Continuous-time problem.}
The continuous-time model is stated as follows. At each time $t \in[0, T]$,
\begin{itemize}
\item The learner picks $x(t) \in X$.
\item The adversary picks reward $r(t) \in [0,1]^d$.
\item The learner observes $r(t)$ and gets reward $x(t)^\t r(t)$.
\end{itemize}
By introducing the cumulative reward
\[
s(t) \equiv \int_0^t r(z) dz,
\]
the continuous-time regret is
\[
R = \max_{x\in X} \left( x^\t s(T) - \int_0^T x(t)^\t ds(t) \right).
\]

{\bf Continuous-time algorithm.}
Motivated by the discrete-time setting, we set the action at time $t$ as
\[
x(t) = \argmax_{x\in X}  \left(x^\t s(t) - F(x)\right).
\]
By Legendre transform,
\[
x(t) = \grad G(s(t)).
\]

{\bf Continuous-time regret bound.} The regret analysis is particularly simple in the continuous-time setting.

\begin{theorem}
  For any $\beta>0$, the continuous-time regret is bounded by $\beta^{-1} \ln d$.
\end{theorem}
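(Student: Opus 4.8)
The plan is to treat $G(s(t))$ as a potential function and to exploit the fact that the chosen action $x(t)=\grad G(s(t))$ makes the instantaneous reward an exact differential of this potential, so that the accumulated-reward integral evaluates in closed form. This is the continuous-time analogue of the potential/telescoping argument used in the discrete setting, but with the pleasant feature that no Taylor remainder terms arise.

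First I would observe that $s(t)=\int_0^t r(z)\,dz$ is absolutely continuous with $\dot s(t)=r(t)$, so the chain rule gives
\[
\frac{d}{dt} G(s(t)) = \grad G(s(t))^\t r(t) = x(t)^\t r(t),
\]
using the algorithm's choice $x(t)=\grad G(s(t))$. Here the second-order term of Ito's lemma vanishes because $s(t)$ has finite variation, so the ordinary chain rule suffices; Ito's lemma proper will only be needed for the stochastic bandit models. Integrating from $0$ to $T$ and applying the fundamental theorem of calculus yields
\[
\int_0^T x(t)^\t\, ds(t) = G(s(T)) - G(s(0)) = G(s(T)) - \beta^{-1}\ln d,
\]
where the last equality uses $s(0)=0$ together with $G(0)=\beta^{-1}\ln(\sum_a 1)=\beta^{-1}\ln d$ from \eqref{eq:FG}.

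Substituting this into the definition of the regret, and noting that the integral term does not depend on $x$, gives
\[
R = \max_{x\in X} x^\t s(T) - G(s(T)) + \beta^{-1}\ln d ,
\]
so it remains only to show $\max_{x\in X}\left(x^\t s(T) - G(s(T))\right) \le 0$. This follows from the Legendre inequality $x^\t y \le F(x)+G(y)$ recorded earlier: for any $x\in\Delta^d$ we have $x^\t s(T) - G(s(T)) \le F(x) \le 0$, the final bound holding because $F(x)=\beta^{-1}\sum_a x_a\ln x_a\le 0$ on the simplex. Taking the maximum over $x$ preserves this inequality, and combining the pieces gives $R\le\beta^{-1}\ln d$.

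The only genuinely delicate step—the one I expect to need the most care—is the exact differential identity: one must confirm that choosing $x(t)=\grad G(s(t))$ is precisely what collapses $\int_0^T x(t)^\t\, ds(t)$ into the telescoping difference $G(s(T))-G(s(0))$, and that no surviving second-order contribution appears because $s(t)$ is of bounded variation. Once this identity is in hand, the remainder is a one-line application of the Legendre (log-sum-exp) bound, and the claimed regret bound is immediate.
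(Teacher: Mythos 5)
Your proposal is correct and follows essentially the same route as the paper's proof: the exact-differential identity $\int_0^T \grad G(s(t))^\t\, ds(t) = G(s(T)) - G(0)$, followed by the Legendre inequality $x^\t s(T) \le F(x) + G(s(T))$ together with $F(x)\le 0$ and $G(0)=\beta^{-1}\ln d$. Your added remark that the chain rule (rather than Ito's lemma) suffices because $s(t)$ has finite variation is a point the paper leaves implicit, but it is the same argument.
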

\begin{proof}
  For any $x\in X$
  \begin{align*}
  x^\t s(T) - \int_0^T x^\t ds(t) &= x^\t s(T) - \int_0^T \grad G(s(t))^\t ds(t) \\ &= x^\t s(T) - G(s(T)) + G(0) \le F(x) + G(0) \le \beta^{-1} \ln d.
  \end{align*}
  Here we used the facts that $x^\t s \le G(s) + F(x)$, $F(x) \le 0$, and $G(0) = \beta^{-1} \ln d$.
\end{proof}

\begin{remark}
  By letting $\beta$ approach infinity, the regret goes to zero. This shows that in the continuous-time case, following the leader is, in fact, the optimal strategy. This is different from the discrete-time setting.
\end{remark}

\section{Adversarial bandit}\label{sec:adbandit}

{\bf Discrete-time problem.}  The discrete-time setting is stated as follows. The arms are indexed by $\{1,\ldots,d\}$.  At the beginning, the adversarial chooses the rewards $r_1,\ldots, r_T \in [0,1]^d$. In each round $t=1,\dots, T$
\begin{itemize}
\item The learner picks arm $a_t$.
\item The learner gets reward $r_{t,a_t}$ (but without knowing other components of $r_t$).
\end{itemize}
Since the arm can be chosen randomly, the regret is defined as
\[
R = \max_a \E \sum_t (r_{t,a} - r_{t,a_t}).
\]

{\bf Discrete-time algorithm.} The algorithm performs two tasks at each round $t$: (1) computing a probability distribution $p_t$ for selecting the arm and (2) forming an estimate $\hat{r}_t$ of $r_t$ based on the only reward $r_{t,a_t}$ received. Assuming that $\hat{r}_z$ are available for time $z<t$, the algorithm defines the probability
\[
p_{t,a} \propto \exp\left(\beta \sum_{z=1}^{t-1} \hat{r}_{z,a}\right)
\]
for an appropriate $\beta$ value and the reward estimate
\[
\hat{r}_{t,a} =
\begin{cases}
  \frac{r_{t,a_t}}{p_{t,a}}, & a=a_t\\
  0,  & \text{otherwise}.
\end{cases}
\]
From the properties of the Legendre transform,
\[
p_t = \grad G\left(\sum_{z=1}^{t-1} \hat{r}_z\right).
\]
The random estimate $\hat{r}_t$ is unbiased
\[
\E \hat{r}_t = r_t
\]
and its covariance matrix $\Sigma_t$ has entries given by
\[
\Sigma_{t,ab} = \frac{r_{t,a}^2}{p_a} \delta_{ab} - r_{t,a} r_{t,b}.
\]
By setting $\beta = \sqrt{\frac{2\ln d}{dT}}$, the discrete-time regret can be bounded by $\sqrt{2Td\ln d}$ \cite{lattimore2020bandit}.

{\bf Continuous-time problem.}  The continuous-time model is stated as follows. At the beginning, the adversarial chooses the rewards $r(t) \in [0,1]^d$ for $0\le t \le T$. At each time $t\in [0,T]$
\begin{itemize}
\item The learner picks arm $a(t)$.
\item The learner gets reward $r(t)_{a(t)}$ (but without knowing other components of $r(t)$).
\end{itemize}
The continuous-time regret is defined as
\[
R = \max_a \E \int_0^T (r(t)_a - r(t)_{a(t)}) dt.
\]

{\bf Continuous-time algorithm.}
Motivated by the discrete-time algorithm, we adopt the reward estimate
\[
\hat{r}(t)_a =
\begin{cases}
  \frac{r(t)_{a(t)}}{p(t)_a}, & a=a(t)\\
  0,  & \text{otherwise}.
\end{cases}
\]
Then the {\em cumulative reward estimate} $s(t)\in \R^d$ follows the following stochastic differential equation (SDE) \cite{oksendal2013stochastic},
\[
ds(t) = r(t) dt + \sigma(t) dB(t),
\]
where $\sigma(t) \sigma(t)^\t = \Sigma(t)$ with $\Sigma(t)_{ab} = \frac{r(t)_a^2}{p(t)_a} \delta_{ab} - r(t)_{a} r(t)_{b}$. The probability of choosing arm $a$ at time $t$ is 
\[
p(t)_a \propto \exp(\beta s(t)_a).
\]
Notice that $p(t) = \grad G(s(t))$.

{\bf Continuous-time regret bound.} The following theorem states the regret bound of the continuous-time algorithm after optimizing $\beta$.
\begin{theorem}
  For $\beta = \sqrt{\frac{2\ln d}{dT}}$, the continuous-time regret is bounded by $\sqrt{2Td\ln d}$.
\end{theorem}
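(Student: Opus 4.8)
The plan is to track the potential $\Phi(t) = G(s(t))$ along the trajectory using Ito's lemma, paralleling the potential-function argument of the discrete case but with the Ito correction replacing the second-order Taylor remainder. First I would record the two quantities entering the regret: conditioned on the history the learner selects arm $a$ with probability $p(t)_a$, so the expected reward rate is $\sum_a p(t)_a r(t)_a = p(t)^\t r(t)$, giving a learner reward of $\E\int_0^T p(t)^\t r(t)\,dt$, while the comparator is $\max_a \int_0^T r(t)_a\,dt$.

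Next I would apply Ito's lemma to $G(s(t))$ along $ds(t) = r(t)\,dt + \sigma(t)\,dB(t)$. Using $\grad G(s(t)) = p(t)$ and $\sigma(t)\sigma(t)^\t = \Sigma(t)$, the drift is $p(t)^\t r(t)\,dt + \tfrac12\tr(\grad^2 G(s(t))\Sigma(t))\,dt$ and the diffusion term $p(t)^\t\sigma(t)\,dB(t)$ is a martingale. Integrating over $[0,T]$ and taking expectations removes the martingale and yields
\[
\E[G(s(T))] - G(0) = \E\int_0^T p(t)^\t r(t)\,dt + \frac12\E\int_0^T \tr(\grad^2 G(s(t))\Sigma(t))\,dt.
\]
I would then lower-bound $\E[G(s(T))]$ by the comparator. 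Since $G(y) = \beta^{-1}\ln\left(\sum_a e^{\beta y_a}\right) \ge \max_a y_a$ and $y\mapsto\max_a y_a$ is convex, Jensen's inequality together with the unbiasedness $\E\hat{r}(t) = r(t)$ gives
\[
\E[G(s(T))] \ge \E\big[\max_a s(T)_a\big] \ge \max_a \E[s(T)_a] = \max_a \int_0^T r(t)_a\,dt.
\]
Combined with $G(0) = \beta^{-1}\ln d$, rearranging the Ito identity bounds the regret by $R \le \beta^{-1}\ln d + \tfrac12\E\int_0^T \tr(\grad^2 G(s(t))\Sigma(t))\,dt$.

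The crux, and the step I expect to be the main obstacle, is bounding the trace term. Writing $p = p(t)$ and $r = r(t)$, I would substitute $\grad^2 G = \beta(\diag(p) - pp^\t)$ and $\Sigma = \diag(r_a^2/p_a) - rr^\t$ and expand the four products, whose cross terms collapse to
\[
\tr(\grad^2 G(s)\,\Sigma) = \beta\left(\sum_a r_a^2 - 2\sum_a p_a r_a^2 + (p^\t r)^2\right).
\]
The convexity bound $(p^\t r)^2 \le \sum_a p_a r_a^2$ (Jensen, $p\in\Delta^d$) and $r_a \in [0,1]$ then give $\tr(\grad^2 G(s)\,\Sigma) \le \beta\sum_a r_a^2 \le \beta d$. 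Hence $R \le \beta^{-1}\ln d + \tfrac12\beta d T$, and AM-GM, minimized at $\beta = \sqrt{2\ln d/(dT)}$, yields $R \le \sqrt{2Td\ln d}$. Beyond getting the trace cancellation exactly right, the only delicate points are justifying that the stochastic integral is a genuine martingale (so its expectation vanishes) and that $\E\hat{r}(t) = r(t)$ holds for the continuous-time arm-selection process; I would treat these at the same formal level as the SDE itself.
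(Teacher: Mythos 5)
Your proof is correct and follows essentially the same route as the paper: track the potential $G(s(t))$ via Ito's lemma, bound the quadratic-variation (Ito correction) term by $\tfrac{\beta d}{2}\,dt$, combine with $G(0)=\beta^{-1}\ln d$ and a Fenchel-type comparison, and optimize $\beta$. The only differences are cosmetic --- you bound the trace by explicit expansion plus Jensen's inequality $(p^\t r)^2 \le \sum_a p_a r_a^2$ where the paper drops the negative-semidefinite parts of both matrices, and you compare against $\E[\max_a s(T)_a]$ where the paper fixes an arm $a$ and uses $x^\t y \le F(x)+G(y)$ with $x=e_a$, $F(e_a)=0$, which is the same inequality $s(T)_a \le G(s(T))$.
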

\begin{proof}
  For an arbitrary arm $a$, let $x=(0,\ldots,1,\ldots,0)^\t$ with $1$ at the $a$-th slot. The regret with respect to $a$ can be recast as 
  \[
  \E \left[x^\t s(T) - \int_0^T \grad G(s(t))^\t ds(t) \right].
  \]
  In order to bound $\int_0^T \grad G(s(t))^\t ds(t)$, we use Ito's lemma
  \[
  dG(s) = \grad G(s)^\t ds + \frac{1}{2} ds^\t  \grad^2 G(s) ds.
  \]
  The second (quadratic variation) term can be written as  
  \[
  \frac{1}{2} \tr \left( ds ds^\t \grad^2 G\right) = 
  \frac{\beta}{2} \tr \left[ \left( \frac{r(t)_a^2}{p(t)_a} \delta_{ab} - r(t)_{a} r(t)_{b} \right)_{ab}
    \left(\delta_{ab} p(t)_a -p(t)_a p(t)_b\right)_{ab} \right] dt.
  \]
  Using the facts that both matrices are symmetric nonnegative definite and that the product only increases if one makes the matrices more positive definite, we can bound this by
  \[
  \frac{\beta}{2} \tr \left[ \left( \frac{r(t)_{a}^2}{p(t)_a} \delta_{ab}  \right)_{ab} \left( \delta_{ab} p(t)_a \right)_{ab} \right] dt
  \le \frac{\beta d}{2} dt,
  \]
  where  we used $r(t)\in [0,1]^d$. From this, we can bound the regret by
  \begin{align*}
    \E \left[ x^\t s(T) - \int_0^T d G(s(t)) \right]  + \frac{\beta dT}{2} &=  \E \left[x^\t  s(T) - G(s(T)) \right] + G(0) + \frac{\beta dT}{2} \\
    & \le F(x) + G(0) + \frac{\beta dT}{2} \le \beta^{-1} \ln d + \frac{\beta dT}{2}.
  \end{align*}
  By choosing $\beta = \sqrt{\frac{2\ln d}{dT}}$, the regret can be bounded by $\sqrt{2Td\ln d}$.
\end{proof}

\section{Adversarial linear bandit}\label{sec:adlinearbandit}

In practice, there might be many but correlated arms. A common setting is an arm set $\A=\{a\}\subset \R^d$ with $|\A|=k \gg d$. Assume that each arm $a \in \R^d$ satisfies $\|a\|_1 \le 1$.

{\bf Discrete-time problem.}  At the beginning, the adversarial chooses the rewards $r_1,\ldots, r_T \in [0,1]^d$. In each round $t=1,\ldots, T$
\begin{itemize}
\item The learner picks arm $a_t \in \A$.
\item The learner gets reward $a_t^\t  r_t$ (but without knowing other information about $r_t$).
\end{itemize}
The discrete-time regret is defined as
\[
R = \max_{a\in\A} \E \sum_t (a-a_t)^\t r_t.
\]

{\bf Discrete-time algorithm.} The discrete-time algorithm performs two tasks at round $t$: (1) computing a probability distribution $p_t$ for selecting an arm and (2) forming an estimate $\hat{r}_t$ for $r_t$ based only on $a_t^\t  r_t$.  Assuming that $\hat{r}_z$ are available for time $z<t$, the algorithm defines the probability
\[
p_{t,a} \propto \exp\left(\beta \sum_{z=1}^{t-1} \hat{r}_z^\t a \right)
\]
for some $\beta>0$ and the reward estimate
\[
\hat{r}_t = Q_t^{-1} a_t (a_t^\t  r_t) \quad\text{with}\quad Q_t = \sum_{a\in \A} p_{t,a} a a^\t .
\]
The random estimate $\hat{r}_t$ is unbiased
\[
\E \hat{r}_t = r_t
\]
and it has a covariance matrix $\Sigma_t \in \R^{d\times d}$ given by
\[
\Sigma_t = \sum_{a\in\A} p_{t,a} Q_t^{-1} a (a^\t r_t) (r_t^\t a) a^\t  Q_t^{-1} - r_t r_t^\t .
\]
By setting $\beta = O\left(\sqrt{\frac{\ln k}{dT}} \right)$ and including appropriate exploration, the regret can be bounded by $O(\sqrt{Td\ln k})$ \cite{lattimore2020bandit}.  Notice that it depends only logarithmically on the number of arms $k$.

{\bf Continuous-time problem.}  The continuous-time model is given as follows. At the beginning, the adversarial chooses the rewards $r(t) \in [0,1]^d$ for $0\le t \le T$. At each time $t \in [0,T]$
\begin{itemize}
\item The learner picks arm $a(t)$.
\item The learner gets reward $a(t)^\t r(t)$ (but without knowing other information of $r(t)$).
\end{itemize}
The continuous-time regret is defined as
\[
R = \max_{a\in A} \E \int_0^T (a-a(t))^\t r(t) dt.
\]

{\bf Continuous-time algorithm.}
Motivated by the discrete-time algorithm, we use the reward estimate
\[
r(t) = Q(t)^{-1} a(t) (a(t)^\t r(t)) \quad\text{with}\quad Q(t) = \sum_{a\in \A} p(t)_a a a^\t .
\]
Then the {\em cumulative reward estimate} $s(t)\in \R^d$ follows the following stochastic differential equation
\[
ds(t) = r(t) dt + \sigma(t) dB(t),
\]
where $\sigma(t) \sigma(t)^\t = \Sigma(t)$ with $\Sigma(t) = \sum_a p(t)_a Q(t)^{-1} a (a^\t r(t)) (r(t)^\t a) a^\t Q(t)^{-1} - r(t) r(t)^\t $.  The probability of choosing arm $a$ at time $t$ is
\[
p(t)_a \propto \exp(\beta a^\t s(t)).
\]

{\bf Continuous-time regret bound.}  Let $A$ be the $k\times d$ matrix with rows given by $a^\t $. Instead of defined over $\R^d$, we redefine $F(x)$ and $G(x)$ over $\R^k$
\[
F(x) = \beta^{-1} \sum_{i=1}^k x_i \ln x_i, \quad
G(y) = \beta^{-1} \ln \left(\sum_{i=1}^k \exp(\beta y_i)\right).
\]
Notice that $G(0) = \beta^{-1} \ln k$ and now $p(t) = \grad G(A s(t)) \in \R^k$.

\begin{theorem}
  For $\beta = \sqrt{\frac{2\ln k}{dT}}$, the continuous-time regret is bounded by $\sqrt{2Td\ln k}$.
\end{theorem}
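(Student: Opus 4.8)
The plan is to mirror the adversarial bandit proof, transporting everything through the linear map $A$ and replacing the dimension-$d$ Hessian computation with its pullback to the arm simplex $\Delta^k$. Fix an arbitrary comparator arm $a\in\A$ and let $e\in\R^k$ be its indicator vector, so that $a=A^\t e$ and $F(e)=0$. Since $\hat r(t)$ is unbiased, $\E s(T)=\int_0^T r(t)\,dt$, so the first half of the regret is $\E[a^\t s(T)]=\E[e^\t A s(T)]$. For the second half, conditioning on the history shows the expected instantaneous reward at time $t$ equals $\sum_{a}p(t)_a\, a^\t r(t)=\grad G(As(t))^\t A r(t)$, which is exactly the drift of $\grad G(As(t))^\t\,d(As(t))$; the martingale part integrates to zero in expectation. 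Hence the per-arm regret becomes
\[
R_a=\E\left[e^\t A s(T)-\int_0^T \grad G(As(t))^\t\,d(As(t))\right].
\]

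Next I would apply Ito's lemma to $G(As(t))$, treating $As(t)\in\R^k$ as the underlying process, to write $\int_0^T \grad G(As)^\t\,d(As)=G(As(T))-G(0)-\tfrac12\int_0^T d(As)^\t\,\grad^2 G(As)\,d(As)$. Using $G(0)=\beta^{-1}\ln k$ together with the Fenchel inequality $e^\t A s(T)-G(As(T))\le F(e)=0$, the regret collapses to
\[
R_a\le \beta^{-1}\ln k+\tfrac12\,\E\int_0^T d(As)^\t\,\grad^2 G(As(t))\,d(As),
\]
so everything reduces to controlling the quadratic-variation integrand.

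I expect this last bound to be the crux. The quadratic variation satisfies $d(As)\,d(As)^\t=A\Sigma(t)A^\t\,dt$, so the integrand is $\tfrac12\tr\bigl(A\Sigma A^\t\,\grad^2 G\bigr)=\tfrac12\tr\bigl(\Sigma\,A^\t\grad^2 G(As)\,A\bigr)$. The key computation is that the pullback Hessian is
\[
A^\t\grad^2 G(As(t))\,A=\beta\bigl(Q(t)-\bar a(t)\,\bar a(t)^\t\bigr),\qquad \bar a(t)\equiv\sum_{a}p(t)_a\,a,
\]
which follows directly from $(\grad^2 G)_{ij}=\beta(\delta_{ij}p_i-p_ip_j)$ and the definition of $Q(t)$. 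Both $\Sigma(t)$ and $Q(t)-\bar a\bar a^\t$ are symmetric positive semidefinite, and $\Sigma(t)\preceq M(t)\equiv\sum_{a}p(t)_a\,Q^{-1}a\,(a^\t r)^2\,a^\t Q^{-1}$ while $Q-\bar a\bar a^\t\preceq Q$; by monotonicity of the trace of a product of positive semidefinite matrices, $\tr\bigl(\Sigma(Q-\bar a\bar a^\t)\bigr)\le\tr(MQ)$. Finally $\tr(MQ)=\sum_{a}p(t)_a\,(a^\t r(t))^2\,(a^\t Q^{-1}a)$, and since $\|a\|_1\le 1$ and $r(t)\in[0,1]^d$ give $(a^\t r(t))^2\le 1$, this is at most $\sum_{a}p(t)_a\,a^\t Q^{-1}a=\tr(Q^{-1}Q)=d$. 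Thus the integrand is bounded by $\tfrac{\beta d}{2}$ and the quadratic-variation term by $\tfrac{\beta dT}{2}$.

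Putting the pieces together yields $R\le\beta^{-1}\ln k+\tfrac{\beta dT}{2}$ uniformly in the comparator arm, and minimizing over $\beta$ gives $\beta=\sqrt{2\ln k/(dT)}$ and the claimed bound $\sqrt{2Td\ln k}$. The only genuinely new ingredient beyond the adversarial bandit argument is the identity $A^\t\grad^2 G A=\beta(Q-\bar a\bar a^\t)$, together with the $\|a\|_1\le 1$ estimate that collapses $\tr(MQ)$ to the dimension $d$ rather than the number of arms $k$; this is precisely what makes the final bound depend only logarithmically on $k$.
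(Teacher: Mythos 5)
Your proposal is correct and follows essentially the same route as the paper: recasting the per-arm regret via $\grad G(As(t))$, applying Ito's lemma to $G(As)$, bounding the quadratic-variation term by $\tfrac{\beta d}{2}\,dt$ using positive-semidefinite trace monotonicity and the identities $A^\t(\delta_{ab}p_a)_{ab}A = Q = \sum_a p_a a a^\t$, and finishing with the Fenchel inequality and optimization over $\beta$. Your write-up is in fact slightly more careful than the paper's (you make the pullback Hessian identity $A^\t\grad^2 G\,A=\beta(Q-\bar a\bar a^\t)$ and the martingale/drift justification explicit), but the argument is the same.
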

\begin{proof}
  For an arbitrary arm $a$, let $x=(0,\ldots,1,\ldots,0)^\t$ with $1$ at the $a$-th slot. The regret with respect to $a$ can be written as
  \[
  \E \left[x^\t  A s(T) - \int_0^T \grad G(A s(t))^\t d (A s(t)) \right].
  \]
  To bound $\int_0^T \grad G(A s(t))^\t d(As(t))$, we again invoke Ito's lemma
  \[
  dG(As) = \grad G(s)^\t Ads + \frac{1}{2} ds^\t  A^\t  \grad^2 G(As) A ds.
  \]
  The second (quadratic variation) term can be written as (hiding the $t$ dependence)
  \begin{align*}
    \frac{1}{2} \tr \left( A ds ds^\t  A^\t \grad^2 G(As) \right) dt &=    \frac{\beta}{2} \tr \left( A^\t  (\delta_{ab} p_a -p_a p_b)_{ab} A  \left(\sum_a p_a Q^{-1} a (a^\t r) (r^\t a) a^\t  Q^{-1} \right)  \right) dt\\
    & \le  \frac{\beta}{2} \tr \left( A^\t  (\delta_{ab} p_a)_{ab}  A              \left( Q^{-1} \sum_a a p_a a^\t  Q^{-1}\right) \right)  dt \le \frac{\beta d}{2} dt.
  \end{align*}
  Here we used $\|a^\t r\|\le 1$, $A^\t  (\delta_{ab} p_a)_{ab}  A=Q$, and $\sum_a a p_a a^\t =Q$.

  From this, we can bound the regret by
  \begin{align*}
    \E \left[ x^\t  A s(T) - \int_0^T dG(As(t)) \right] + \frac{\beta dT}{2} &=  \E \left[ x^\t  A s(T) - G(A s(T)) \right] + G(0) + \frac{\beta dT}{2} \\
    &\le F(x) + G(0) + \beta dT/2 \le \beta^{-1} \ln k + \frac{\beta dT}{2}.
  \end{align*}
  By choosing $\beta = \sqrt{\frac{2\ln k}{dT}}$, the regret can be bounded by $\sqrt{2Td\ln k}$.
\end{proof}
Notice again that the bound depends only logarithmically on the number of arms $k$.

\section{Discussions}\label{sec:disc}

This note discusses continuous-time formulations and algorithms for several online learning problems. The main advantage of the continuous-time approach is that the proof of the regret bounds can be made concise. Many other online learning problems can be addressed similarly, including online convex optimization, semi-bandits, combinatorial bandits, and stochastic bandits. 

\bibliographystyle{abbrv}

\bibliography{ref}

\end{document}